\newtheorem{thm}{Theorem}[section]
\newtheorem{prop}[thm]{Proposition}
\newtheorem{dfn}[thm]{Definition}
\newtheorem{ex}[thm]{Example}
\newtheorem{cor}[thm]{Corollory}
\newtheorem{rmk}[thm]{Remark}
\newcommand\BibTeX{{\rmfamily B\kern-.05em \textsc{i\kern-.025em b}\kern-.08em
T\kern-.1667em\lower.7ex\hbox{E}\kern-.125emX}}
\begin{document}

\runninghead{Shanookha Ali and Nitha Niralda P C}

\title{Identifying Critical Pathways in Coronary Heart Disease via Fuzzy Subgraph Connectivity}

\author{Shanookha Ali\affilnum{1} and Nitha Niralda P C \affilnum{2}}

\affiliation{\affilnum{1}Department of General Science, Birla Institute of Technology \& Science, Pilani, Dubai Campus, Dubai 345055, United Arab Emirates\\
\affilnum{2}Department of Mathematics \& Statistics, Providence Women’s College, Calicut, Kerala, 673009, India}

\corrauth{Shanookha Ali, shanookha@dubai.bits-pilani.ac.in}

\begin{abstract}
Coronary heart disease (CHD) arises from complex interactions among uncontrollable factors, controllable lifestyle factors, and clinical indicators, where relationships are often uncertain. Fuzzy subgraph connectivity (FSC) provides a systematic tool to capture such imprecision by quantifying the strength of association between vertices and subgraphs in fuzzy graphs. In this work, a fuzzy CHD graph is constructed with vertices for uncontrollable, controllable, and indicator components, and edges weighted by fuzzy memberships. Using FSC, we evaluate connectivity to identify strongest diagnostic routes, dominant risk factors, and critical bridges. Results show that FSC highlights influential pathways, bounds connectivity between weakest and strongest correlations, and reveals critical edges whose removal reduces predictive strength. Thus, FSC offers an interpretable and robust framework for modeling uncertainty in CHD risk prediction and supporting clinical decision-making. 
\end{abstract}

\maketitle

\section{Introduction}
\noindent Coronary heart disease (CHD) is a leading global health burden, accounting for significant morbidity and mortality. The identification and analysis of CHD risk factors is a major challenge in preventive cardiology. These risk factors are broadly classified into \emph{uncontrollable factors} such as age and family history, and \emph{controllable factors} such as smoking, diet, and physical activity. Additionally, clinical \emph{indicators} such as blood pressure, cholesterol levels, and electrocardiogram (ECG) findings serve as intermediate measures linking lifestyle and hereditary factors with disease outcomes.\medskip

In the study of fuzzy graphs, various aspects such as connectivity, spanning structures, and network flow have been extensively explored. The foundational concept of fuzzy sets introduced by Zadeh~\cite{adeh} laid the groundwork for the development of fuzzy graph theory, which was formalized by Rosenfeld~\cite{ros1977}. Early investigations on fuzzy graphs and their properties were carried out by Bhattacharya~\cite{bhat1987}, Bhutani and Rosenfeld~\cite{but2003, butarc2023} and Rosenfeld~\cite{ros1977},  who examined strong arcs and fuzzy end nodes.  Applications of vertex and node connectivity in fuzzy graphs have been explored in the context of human trafficking networks by Ali et al.~\cite{shan2018}. Comprehensive treatments of fuzzy graph theory can be found in the books by Mordeson and Nair~\cite{mod2000} and Mordeson et al.~\cite{mathew2018}, including detailed discussions on node and arc connectivity Mathew and Sunitha~\cite{mathew2010}.\medskip 

Vertex connectivity in fuzzy graphs has been analyzed to evaluate resilience and vulnerability in trafficking chains Ali et al.~\cite{shan2018}, while Hamiltonian fuzzy graphs provide a foundation for understanding cyclical structures that often underlie such networks Ali et al.~\cite{ali2021}. More recently, the concept of containers and spanning containers has been introduced to study the flow and concealment strategies of traffickers within uncertain environments Ali et al.~\cite{ali2024}. Complementing these contributions, Mordeson, Mathew, and Ali have applied fuzzy path fans to model the health consequences faced by trafficking victims, thereby highlighting the applicability of fuzzy graph theory beyond structural analysis and into human well-being  by Mordeson et al.~\cite{mode2017}. Together, these works underscore the versatility of fuzzy graph theory in addressing both theoretical challenges and real-world problems associated with trafficking.\medskip

Conventional diagnostic methods, including regression and probabilistic models, often assume precise relationships between variables. However, in real-world medical data, relationships are rarely crisp or deterministic. For example, the effect of age on CHD varies across populations, and the influence of smoking on cardiovascular risk may depend on duration, intensity, and co-occurring conditions. These inherent uncertainties motivate the application of \emph{fuzzy graph theory}, which provides a mathematical framework for handling imprecise, uncertain, and approximate relationships.\medskip

In this study, we construct a \emph{fuzzy CHD graph} in which vertices represent uncontrollable, controllable, and indicator factors, while edges denote their relationships with membership values in $[0,1]$. Using fuzzy connectivity concepts, we investigate:
\begin{enumerate}
    \item pairwise connectivity ($CONN_G(u,v)$) between individual risk factors,
    \item vertex-to-subgraph connectivity ($CONN_G(x,H)$), which evaluates the influence of a single factor on a group of related components, and
    \item subgraph-to-subgraph connectivity ($CONN_G(H_i,H_j)$), which assesses the global strength of interaction between categories of factors.
\end{enumerate}

Our analysis shows that fuzzy connectivity not only quantifies the strength of risk pathways but also identifies \emph{critical bridges} edges whose removal significantly reduces connectivity. Such bridges highlight key clinical factors (e.g., smoking – ECG relationship) that dominate diagnostic predictions. Moreover, strongest paths reveal the most significant diagnostic routes, providing interpretability to clinicians. By bounding connectivity between the weakest and strongest observed correlations, the framework also ensures reliable upper and lower limits for risk prediction.

Overall, this fuzzy graph-based approach contributes to a more nuanced understanding of CHD risk dynamics. By capturing uncertainty and identifying strongest connections, the method supports both diagnostic decision-making and the prioritization of preventive interventions. 

\section{Preliminaries}

In this section, we briefly recall the basic concepts of fuzzy sets, fuzzy graphs, and fuzzy connectivity that will be used in the sequel. Throughout, we follow standard notations used in the literature \cite{ros1977}, \cite{mode2018}, \cite{bhat1987}.

A fuzzy set $A$ on a universe $X$ is defined by a membership function 
$\mu_A : X \to [0,1]$, where $\mu_A(x)$ represents the degree of membership of $x$ in $A$. 
For example, if $X$ is the set of possible ages of patients, then the fuzzy set ``elderly'' may assign $\mu_A(65) = 0.8$, $\mu_A(50)=0.4$, reflecting vagueness in age categorization. 
This concept provides a natural tool to model uncertainty in medical datasets.

A fuzzy graph $G$ is defined as a pair $G = (\sigma, \mu)$, where 
$\sigma : V \to [0,1]$ assigns a membership value to each vertex $v \in V$ 
and $\mu : V \times V \to [0,1]$ assigns a membership value to each edge $(u,v)$, subject to the condition
\[
\mu(u,v) \leq \min\{\sigma(u), \sigma(v)\}.
\]
Here $\sigma(u)$ may represent the relevance of a CHD factor, while $\mu(u,v)$ denotes the strength of relationship between two factors.
This generalization of classical graphs was first introduced by Rosenfeld \cite{ros1977}.
A \emph{path} in a fuzzy graph $G$ is a sequence of vertices $u_0,u_1,\dots,u_k$ such that $\mu(u_i,u_{i+1}) > 0$ for all $i$. 
The \emph{strength} of a path $P$ is defined as
\[
\operatorname{str}(P) = \min_{(u_i,u_{i+1}) \in P} \mu(u_i,u_{i+1}).
\]
The \emph{$u$--$v$ connectivity} in $G$ is then given by
\[
CONN_G(u,v) = \max_{P:u \leadsto v} \operatorname{str}(P),
\]
where the maximum is taken over all paths connecting $u$ and $v$.

Some basic results used in this paper are summarized below:
\begin{itemize}
    \item Symmetry: $CONN_G(H_i,H_j) = CONN_G(H_j,H_i)$.
    \item Bounds: $r(G) \leq CONN_G(H_i,H_j) \leq d(G)$, where $r(G)$ and $d(G)$ are the minimum and maximum edge strengths in $G$.
    \item Bridge property: If $uv$ is a fuzzy bridge in $G$, then there exist subgraphs $H_i,H_j$ such that $CONN_G(H_i,H_j) = \mu(uv)$.
    \item Strongest path: The strongest path between two subgraphs corresponds to the most significant diagnostic route in an applied setting.
\end{itemize}
These concepts will be applied to the fuzzy CHD graph.

\section{Fuzzy subgraph connectivity}
\medskip
\newcommand{\mx}[1]{\displaystyle\max_{#1}}
\begin{dfn}
 $u-v$ connectivity: Maximum of the strengths of all paths between $u$ and $v$ is called strength of connectedness between $u$ and $v$ and denoted by $CONN_G(u,v)$.
\end{dfn}
\begin{dfn}
Let $G=(\sigma, \mu)$ be a fuzzy graph with proper  fuzzy subgraph $H(\nu,\tau)$. For $x$ $\in$ $\sigma^* \backslash \nu^*$, $x-H$ connectivity is defined as maximum of strength of connectedness between $x$ and $u$ where $u \in \nu^*$,  denoted by $CONN_G(x,H)$. That is $$CONN_G(x,H) = \mx{u\in \nu^*} CONN_G(x,u)$$.
\end{dfn}

\begin{figure}[htbp]
\centering
 \scalebox{0.6}{
\begin{tikzpicture}[>=stealth, node distance=1cm, thick]
	\ifx\plotpoint\undefined\newsavebox{\plotpoint}\fi
	\draw (2,2) -- (4,2) -- (4,5) -- (2,5);
	\draw (4,2) -- (6,3.5) -- (4,5);
	\draw (10,2) -- (12,3.5) -- (10,5) --(10,2);
	\draw (2,5) -- (4,2);
	\put(56,56){\circle*{5}}
	\put(115,142){\circle*{5}}
	\put(56,142){\circle*{5}}
	\put(170,99){\circle*{5}}
	\put(115,56){\circle*{5}}
	\put(50,150){\makebox(0,0)[cc]{$a$}}
	\put(50,48){\makebox(0,0)[cc]{$e$}}
	\put(118,151){\makebox(0,0)[cc]{$b$}}
	\put(118,48){\makebox(0,0)[cc]{$d$}}
	\put(179,99){\makebox(0,0)[cc]{$c$}}
	\put(124,100){\makebox(0,0)[cc]{$0.1$}}
	\put(82,150){\makebox(0,0)[cc]{$0.4$}}
	\put(82,45){\makebox(0,0)[cc]{$0.3$}}
	\put(150,130){\makebox(0,0)[cc]{$0.15$}}
	\put(150,70){\makebox(0,0)[cc]{$0.1$}}
	\put(95,105){\makebox(0,0)[cc]{$0.9$}}
	\put(284,142){\circle*{5}}
	\put(342,99){\circle*{5}}
	\put(284,56){\circle*{5}}
	\put(284,151){\makebox(0,0)[cc]{$b$}}
	\put(284,48){\makebox(0,0)[cc]{$d$}}
	\put(350,99){\makebox(0,0)[cc]{$c$}}
	\put(320,130){\makebox(0,0)[cc]{$0.15$}}
	\put(296,100){\makebox(0,0)[cc]{$0.1$}}
	\put(320,68){\makebox(0,0)[cc]{$0.1$}}
	\put(72,20){\makebox(0,0)[cc]{$(a)\,\,\, G$}}
	\put(320,20){\makebox(0,0)[cc]{$(b)\,\,\, H$}}
	
\end{tikzpicture}}
\vspace{1cm}
\caption{ Fuzzy graphs $G$ with proper induced fuzzy subgraph $H$.}
\label{fig1}
\end{figure}
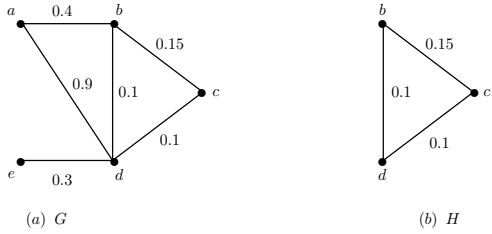

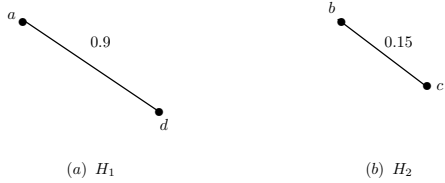
\begin{figure}[htbp]
\centering
 \scalebox{0.6}{
\begin{tikzpicture}[>=stealth, node distance=1cm, thick]
	\ifx\plotpoint\undefined\newsavebox{\plotpoint}\fi
	\draw  (10,2.5) -- (8,4);
	\draw (1,4) -- (4,2);
	\put(29,112){\circle*{5}}
	\put(115,56){\circle*{5}}
	\put(22,116){\makebox(0,0)[cc]{$a$}}
	\put(118,48){\makebox(0,0)[cc]{$d$}}
	\put(78,100){\makebox(0,0)[cc]{$0.9$}}
	\put(230,112){\circle*{5}}
	\put(284,72){\circle*{5}}
	\put(224,121){\makebox(0,0)[cc]{$b$}}
	\put(292,72){\makebox(0,0)[cc]{$c$}}
	\put(266,100){\makebox(0,0)[cc]{$0.15$}}
	\put(72,20){\makebox(0,0)[cc]{$(a)\,\,\,H_1$}}
	\put(260,20){\makebox(0,0)[cc]{$(b)\,\,\, H_2$}}
	
\end{tikzpicture}}
\vspace{1cm}

\caption{ Proper induced fuzzy subgraphs of fuzzy graph $G$ in Example~\ref{ex1.3}}
\label{fig2}
\end{figure}

\begin{ex}\label{ex1.3}
 Let $G=(\sigma, \mu)$ be with $\sigma^* = \{a,b,c,d,e\}$, $\sigma(a,b)= 0.4$, $\sigma(b,c) = 0.15$, $\sigma(b,d) = \sigma(c,d) = 0.1$, $\sigma(a,d)= 0.9$ and $\sigma(e,d) = 0.3$ (see Figure \ref{fig1} $(a)$). Let $H=(\nu, \tau)$ be fuzzy subgraph induced by $\nu^* = \{b, c,d \}$ (see Figure \ref{fig1} $(b)$). For $a \in \sigma^*\backslash \nu^*$, $CONN_G(a,H) = \max \{CONN_G(a,b), CONN_G(a,d), CONN_G(a,c) \}$ $=\max \{0.4,0.15,0.9\} = 0.9$. Similarly   $CONN_G(e,H) = 0.4$.
\end{ex}

\begin{dfn}
 Let $G=(\sigma, \mu)$ be a fuzzy graph with proper disjoint induced fuzzy subgraphs $H_1$ and $H_2$. Then fuzzy subgraph connectivity (FSC) between $H_1$ and $H_2$   denoted by $CONN_G(H_1,H_2)$ is maximum of $CONN_G(x,H_2)$, for $x \in  \sigma_1^*$. That is  $CONN_G(H_1,H_2) = \mx{x \in \sigma_1^*} CONN_G(x,H_2)$.
\end{dfn}

\noindent Consider the  fuzzy graph $G$ in Figure \ref{fig1} $(a)$. Let $H_1$ and  $H_2$  be fuzzy subgraphs of $G$ induced by $\{a,d\}$ and $\{b,c\}$ respectively (see Figure \ref{fig2} $(a)$ and $(b))$. Then $CONN_G(H_1,H_2) = \max \{ CONN_G(a,H_2), $ $CONN_G(d,H_2)\}  = 0.4$. \medskip

\begin{prop}\label{prop1.5}
	 Let $G=(\sigma, \mu)$ be a fuzzy graph with proper disjoint induced fuzzy subgraphs $H_1$ and $H_2$. Then FSC is symmetric.
\end{prop}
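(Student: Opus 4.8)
The plan is to reduce the claimed symmetry to two elementary facts: the symmetry of ordinary pairwise connectivity $CONN_G(u,v)=CONN_G(v,u)$, and the commutativity of the maximum over finite index sets. The idea is to unfold both $CONN_G(H_1,H_2)$ and $CONN_G(H_2,H_1)$ into the same iterated maximum taken over the vertex sets $\sigma_1^*$ and $\sigma_2^*$, and then observe that the two expressions differ only by the order of maximization and the orientation of each pairwise connectivity term.

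First I would expand both sides using the two preceding definitions. Writing $\sigma_2^*$ for the vertex set of $H_2$, the definition of $x$--$H$ connectivity gives $CONN_G(x,H_2)=\max_{y\in\sigma_2^*}CONN_G(x,y)$, and hence
\[
CONN_G(H_1,H_2)=\max_{x\in\sigma_1^*}CONN_G(x,H_2)=\max_{x\in\sigma_1^*}\max_{y\in\sigma_2^*}CONN_G(x,y).
\]
By the same steps applied to the pair $(H_2,H_1)$,
\[
CONN_G(H_2,H_1)=\max_{y\in\sigma_2^*}\max_{x\in\sigma_1^*}CONN_G(y,x).
\]
Here the disjointness hypothesis is exactly what makes both quantities well-defined: since $\sigma_1^*\cap\sigma_2^*=\emptyset$, every $x\in\sigma_1^*$ lies in $\sigma^*\setminus\sigma_2^*$ (and symmetrically), so each inner term is a legitimate instance of $x$--$H$ connectivity.

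The second step is to justify the termwise symmetry $CONN_G(x,y)=CONN_G(y,x)$. This holds because reversing any $x$--$y$ path traverses exactly the same edges, so it has the same strength (the minimum of the same edge memberships); the two families of paths therefore realize the same set of strengths and the same maximum. Finally, since $\sigma_1^*$ and $\sigma_2^*$ are finite and nonempty, the maximum is commutative, so I may interchange the two maxima in the expression for $CONN_G(H_1,H_2)$ and then apply the termwise symmetry to recover $CONN_G(H_2,H_1)$ exactly. No genuine obstacle arises; the only point requiring care is confirming that the properness and disjointness of $H_1,H_2$ guarantee both iterated maxima range over the same nonempty finite index sets, so that the interchange is legitimate.
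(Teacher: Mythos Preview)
Your proof is correct and follows the same approach as the paper: both reduce the symmetry of FSC to the symmetry of pairwise connectivity $CONN_G(u,v)=CONN_G(v,u)$, with the paper simply asserting the conclusion is clear while you explicitly unfold the definitions and justify the interchange of maxima. Your version is a fully detailed rendering of what the paper states in one line.
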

	For $u,v \in \sigma^*$,  $CONN_G(u,v) = CONN_G(v,u)$. So  it is clear that $CONN_G(H_1,H_2)$ = $CONN_G(H_2,H_1)$. \medskip

\noindent Fuzzy subgraph connectivity need not be transitive. This can be observed from the following example. \medskip 

\begin{ex}\label{ex1}
 Let $G=(\sigma, \mu)$ be a fuzzy graph with $\sigma^* = \{a,b,c,d,e,f,g\}$, $\sigma(a,b)= 0.1$, $\sigma(b,c) = 0.3$, $ \sigma(c,d) = 0.25$, $\sigma(d, e) = 0.9$ $\sigma(c,f)= 0.11$, $\sigma(a,c) = 0.9$ and $\sigma(a,g) = 0.8$ (see Figure \ref{fig3}). Let $H_1=(\nu_1, \tau_1)$, $H_2=(\nu_2, \tau_2)$, and $H_3=(\nu_3, \tau_3)$ be fuzzy subgraph induced by $\nu_1^* = \{a,b \}$, $\nu_2^* = \{d,e \}$ and $\nu_3^* = \{f,g \}$ respectively. Then $ \{CONN_G(H_1,H_2) = 0.25$ and $CONN_G(H_2,H_3)\} = 0.25 $. Where as  $CONN_G(H_1,H_3)\} = 0.8. $
\end{ex}

\begin{figure}[htbp]
\centering
 \scalebox{0.6}{
\begin{tikzpicture}[>=stealth, node distance=1cm, thick]
	\ifx\plotpoint\undefined\newsavebox{\plotpoint}\fi
	\draw (1,3) -- (1,5) -- (2,6) -- (3,5)--(5,5) -- (7,5);
	\draw (1,5) -- (3,5) -- (3,3);
	\put(29,88){\circle*{5}}
	\put(29,142){\circle*{5}}
	\put(86,142){\circle*{5}}
	\put(86,88){\circle*{5}}
	\put(198,142){\circle*{5}}
	\put(144,142){\circle*{5}}
	\put(57,171){\circle*{5}}
	\put(22,150){\makebox(0,0)[cc]{$a$}}
	\put(57,180){\makebox(0,0)[cc]{$b$}}
	\put(89,150){\makebox(0,0)[cc]{$c$}}
	\put(144,150){\makebox(0,0)[cc]{$d$}}
	\put(198,150){\makebox(0,0)[cc]{$e$}}
	\put(92,88){\makebox(0,0)[cc]{$f$}}
	\put(22,88){\makebox(0,0)[cc]{$g$}}
	\put(34,162){\makebox(0,0)[cc]{$0.1$}}
	\put(78,162){\makebox(0,0)[cc]{$0.3$}}
	\put(100,118){\makebox(0,0)[cc]{$0.11$}}
	\put(18,118){\makebox(0,0)[cc]{$0.3$}}
	\put(114,150){\makebox(0,0)[cc]{$0.25$}}
	\put(170,150){\makebox(0,0)[cc]{$0.9$}}
	\put(56,131){\makebox(0,0)[cc]{$0.9$}}
\end{tikzpicture}}
\caption{ Fuzzy graph in the Example~\ref{ex1} .}
\label{fig3}
\end{figure}
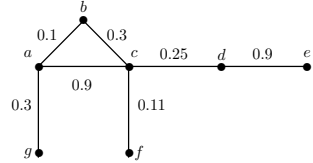

\begin{dfn}
 Let $G=(\sigma, \mu)$ be a fuzzy graph. A pair of  proper disjoint induced fuzzy subgraphs $H_1$ and $H_2$ is said to be $t$-fuzzy subgraph connected if  $CONN_G(H_1,H_2) = t$. 
\end{dfn}


\begin{thm}
 Let $X = \{H_1, H_2, \cdots, H_k\}$ be the set of fuzzy subgraphs of  fuzzy graph $G=(\sigma, \mu)$ such that $ CONN_G(H_i,H_j) = t$ for $i \neq j$. Then we define a relation $R$ on $X$ such that $H_i R H_j$ if and only if $ CONN_G(H_i,H_j) = t$ or $H_i=H_j$.
\end{thm}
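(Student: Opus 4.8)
The statement to be established is evidently that $R$ is an equivalence relation on $X$, so the plan is to verify reflexivity, symmetry, and transitivity in turn. Reflexivity is immediate: for each $H_i \in X$ the defining condition for $R$ contains the disjunct $H_i = H_i$, so $H_i R H_i$ holds for every $i$ with no appeal to the connectivity hypothesis.

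Symmetry follows from Proposition~\ref{prop1.5}. Suppose $H_i R H_j$. If $H_i = H_j$, then $H_j = H_i$ yields $H_j R H_i$. Otherwise $CONN_G(H_i,H_j) = t$, and by the symmetry of fuzzy subgraph connectivity we have $CONN_G(H_j,H_i) = CONN_G(H_i,H_j) = t$, so $H_j R H_i$ again. Hence $R$ is symmetric.

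Transitivity is the step where the hypothesis on $X$ is essential. Suppose $H_i R H_j$ and $H_j R H_m$, and split into cases according to whether each relation holds by equality or by the connectivity condition. Whenever an equality $H_i = H_j$ or $H_j = H_m$ occurs, it can be substituted directly to reduce to the surviving relation, giving $H_i R H_m$. The only remaining case has $CONN_G(H_i,H_j) = t$ and $CONN_G(H_j,H_m) = t$ with $H_i, H_j, H_m$ pairwise distinct. Here transitivity of connectivity itself is unavailable --- indeed Example~\ref{ex1} exhibits subgraphs for which $CONN_G$ fails to be transitive --- but it is not needed: since $H_i$ and $H_m$ are distinct members of $X$, the hypothesis that $CONN_G(H_p,H_q) = t$ for all $p \neq q$ gives $CONN_G(H_i,H_m) = t$, whence $H_i R H_m$. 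Thus $R$ is transitive, and $R$ is an equivalence relation.

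The point worth stressing is that the hypothesis forces $R$ to coincide with the universal relation on $X$: every pair of elements is related, either through equality or through the common connectivity value $t$. Consequently the only genuine content of the verification is the symmetry step via Proposition~\ref{prop1.5}, while transitivity --- which would fail for an unrestricted family, as the non-transitivity example demonstrates --- holds automatically under the stated assumption. I therefore anticipate no substantive obstacle; the only care required is in organizing the case analysis for transitivity so that the role of the hypothesis, as opposed to an illegitimate appeal to transitivity of $CONN_G$, is made explicit.
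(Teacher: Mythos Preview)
Your proposal is correct and follows essentially the same approach as the paper: verify reflexivity via the equality clause, symmetry via Proposition~\ref{prop1.5}, and transitivity by reducing to the hypothesis that every distinct pair in $X$ already has connectivity $t$, then observe that $R$ is the universal relation on $X$. The paper's proof is organized identically, with only cosmetic differences in how the transitivity case split is phrased.
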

\begin{proof}
We prove that $R$ is an equivalence relation on $X$ by checking reflexivity, symmetry and transitivity.\medskip

 For any $H_i\in X$ we have $H_i=H_i$, so by definition $H_i R H_i$. Hence $R$ is reflexive.
 Let $H_i,H_j\in X$ and suppose $H_i R H_j$. By definition this means either $H_i=H_j$ or $\operatorname{CONN}_G(H_i,H_j)=t$. If $H_i=H_j$ then trivially $H_j=H_i$ and thus $H_j R H_i$. Otherwise, since connection is symmetric (i.e. $\operatorname{CONN}_G(H_i,H_j)=\operatorname{CONN}_G(H_j,H_i)$ for all $i,j$), we have $\operatorname{CONN}_G(H_j,H_i)=t$, hence $H_j R H_i$. Thus $R$ is symmetric. Let $H_i,H_j,H_k\in X$ and assume $H_i R H_j$ and $H_j R H_k$. We consider cases:
If any two of $H_i,H_j,H_k$ are equal, transitivity follows immediately from reflexivity/symmetry.  Otherwise all three are distinct. By hypothesis of the theorem, for every pair of distinct indices the connection equals \(t\); in particular
$    \operatorname{CONN}_G(H_i,H_k)=t.$  
  Hence $H_i R H_k$.
Therefore $R$ is transitive.

Having checked reflexivity, symmetry and transitivity, we conclude that \(R\) is an equivalence relation on \(X\).

\medskip

\noindent Under the stated hypothesis (every pair of distinct subgraphs has connection \(t\)), every two distinct elements of \(X\) are related; hence \(R\) is actually the universal relation on \(X\), so \(X\) is a single equivalence class under \(R\).
\end{proof}

\begin{prop}\label{prop1.9}
 $uv$ is a fuzzy bridge of $G = (\sigma, \mu)$ if and only if there exists a pair of  proper  induced disjoint fuzzy subgraphs  $H_1$ and $H_2$ with $CONN_G(H_1,H_2) = \mu(u,v)$.
\end{prop}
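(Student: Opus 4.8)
The plan is to prove the biconditional by handling the two implications separately, reducing fuzzy subgraph connectivity to ordinary pairwise connectivity in both directions via the defining identity $CONN_G(H_1,H_2)=\max_{x\in\sigma_1^*,\,y\in\sigma_2^*}CONN_G(x,y)$. I will also invoke the standard characterization of a fuzzy bridge: $uv$ is a fuzzy bridge exactly when deleting it strictly lowers the strength of connectedness between its endpoints, that is, $\mu(u,v)>CONN_{G-uv}(u,v)$.

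For the forward implication, assume $uv$ is a fuzzy bridge. The most economical construction is to take $H_1$ and $H_2$ to be the single-vertex induced subgraphs on $\{u\}$ and $\{v\}$; these are proper, disjoint, and induced, since $u\neq v$ and $G$ contains at least the two vertices $u,v$. Then $CONN_G(H_1,H_2)=CONN_G(u,v)$, so it remains to check that a bridge satisfies $CONN_G(u,v)=\mu(u,v)$. Indeed, every $u$--$v$ path is either the single edge $uv$, of strength exactly $\mu(u,v)$, or avoids $uv$, of strength at most $CONN_{G-uv}(u,v)<\mu(u,v)$; hence the strongest path is the edge itself, giving $CONN_G(H_1,H_2)=\mu(u,v)$. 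This is precisely the bridge property recorded in the Preliminaries.

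For the converse, suppose disjoint induced subgraphs $H_1,H_2$ satisfy $CONN_G(H_1,H_2)=\mu(u,v)$. Unfolding the definition yields vertices $x\in\sigma_1^*$ and $y\in\sigma_2^*$ with $CONN_G(x,y)=\mu(u,v)$, this value being maximal over all cross pairs. The aim is to conclude that $uv$ is a fuzzy bridge, and the natural approach is to identify $uv$ as the bottleneck arc of a strongest $x$--$y$ path and then argue that deleting $uv$ forces every surviving $x$--$y$ route below $\mu(u,v)$, which would give $\mu(u,v)>CONN_{G-uv}(x,y)$ and hence the bridge condition.

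The main obstacle lies exactly here. The hypothesis records only an \emph{equality} of connectivity values, and equality is compatible with an alternative path of the very same strength $\mu(u,v)$ that avoids $uv$, in which case $uv$ fails the strict inequality defining a bridge. A triangle on $a,b,c$ with all three arc memberships equal to a common $t$ already exhibits this: $CONN_G(a,b)=t=\mu(a,b)$, so the single-vertex subgraphs on $\{a\}$ and $\{b\}$ realize $CONN_G(H_1,H_2)=\mu(a,b)$, yet $ab$ is not a bridge, since the detour through $c$ preserves connectivity after deletion. Consequently, to push the converse through I expect to need a strengthening of the hypothesis---most naturally that $u\in\sigma_1^*$, $v\in\sigma_2^*$, together with $\mu(u,v)$ \emph{strictly} exceeding $CONN_G(x,y)$ for every other cross pair, or a reformulation forcing the realizing path to be unique. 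Securing (or correctly restricting to) this strictness is the crux; once it is in hand, the remainder is routine comparison of path strengths.
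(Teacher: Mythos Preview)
Your forward implication is correct and in fact tighter than the paper's. The paper picks an arbitrary pair $x,y$ whose connectivity drops on deleting $uv$ and then asserts $CONN_G(\{x\},\{y\})=\mu(u,v)$; but for a generic such pair this can fail (on the path $a$--$b$--$c$--$d$ with weights $0.5,\,0.8,\,0.3$, the edge $bc$ is a bridge and $CONN_G(a,d)=0.3\neq 0.8=\mu(b,c)$). Your choice of the endpoints $u,v$ themselves, together with the standard characterization $\mu(u,v)>CONN_{G-uv}(u,v)$, is exactly what is needed to make the equality $CONN_G(\{u\},\{v\})=\mu(u,v)$ legitimate.

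On the converse you have put your finger on a real defect in the \emph{statement}, not merely in your argument. Your triangle with all three edge strengths equal to $t$ is a valid counterexample: no edge is a fuzzy bridge, yet for the singletons $H_1=\{a\}$ and $H_2=\{b\}$ one has $CONN_G(H_1,H_2)=t=\mu(a,b)$. The paper's proof glosses over this by writing ``Hence $uv$ is an edge of every strongest $H_1$--$H_2$ path in $G$'' immediately after the hypothesis $CONN_G(H_1,H_2)=\mu(u,v)$; that implication is exactly what your triangle refutes, since the numerical coincidence of values says nothing about which edges the strongest path traverses. Your diagnosis that some strictness or uniqueness hypothesis (e.g.\ that $uv$ lies on every strongest $H_1$--$H_2$ path, or that $\mu(u,v)$ strictly dominates all alternative routes) is required is correct. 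As written, the ``if'' direction is false and cannot be proved.
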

\begin{proof}
	Suppose $uv$ is a fuzzy bridge of $G$. Then removal $uv$ reduces the strength of connectedness between some pair of vertices say $x$ and $y$ in $G$. Choose $H_1$ as $\{x\}$ and $H_2$ as $\{y\}$. Then $CONN_G(H_1,H_2) = \mu(u,v)$. Conversely assume that  for proper  induced disjoint fuzzy subgraphs  $H_1$ and $H_2$, $CONN_G(H_1,H_2) = \mu(u,v)$. Hence $uv$ is an edge of every strongest $H_1 - H_2$ path in $G$. Choose a vertex $x$ from $\sigma_1^*$ and $y$ from $ \sigma_1^*$. It follows that $uv$ is an edge of every strongest $x-y$ path. Therefore, $uv$ is a fuzzy bridge of $G$.
\end{proof}

\begin{prop}\label{prop1.10}
 $P$ is a strongest path in $G$ if and only if there exists a pair of  proper disjoint induced fuzzy subgraphs  $H_1$ and $H_2$ with $CONN_G(H_1,H_2) = \operatorname{str} (P)$.
\end{prop}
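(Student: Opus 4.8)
The plan is to read ``$P$ is a strongest path'' in the only way that makes the biconditional correct: if $x$ and $y$ denote the endpoints of $P$, then $P$ is strongest means $\operatorname{str}(P)=CONN_G(x,y)$, i.e. $P$ attains the maximum strength among all $x$--$y$ paths. With this reading the argument rests on just two facts already available: the defining formula $CONN_G(H_1,H_2)=\max_{x\in\sigma_1^*}\max_{y\in\nu_2^*}CONN_G(x,y)$, and the observation (used already in the proof of Proposition~\ref{prop1.9}) that a single vertex induces a legitimate proper fuzzy subgraph. The overall strategy mirrors the bridge characterization: the realizing subgraphs are the singletons sitting at the two endpoints of $P$.

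For the forward direction, suppose $P$ is strongest with endpoints $x,y$, so $\operatorname{str}(P)=CONN_G(x,y)$. I would take $H_1$ to be the fuzzy subgraph induced by $\{x\}$ and $H_2$ the one induced by $\{y\}$. Since a path has distinct endpoints, $x\neq y$, so $H_1$ and $H_2$ are disjoint, and each is proper because $G$ contains the other endpoint. Applying the definition of FSC to singletons collapses the double maximum to a single term, giving $CONN_G(H_1,H_2)=CONN_G(x,y)=\operatorname{str}(P)$, which exhibits the required pair.

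For the converse, suppose disjoint proper induced subgraphs $H_1,H_2$ satisfy $CONN_G(H_1,H_2)=\operatorname{str}(P)$. By the definition of FSC the maximum is attained at some $x_0\in\sigma_1^*$, $y_0\in\nu_2^*$ with $CONN_G(x_0,y_0)=\operatorname{str}(P)$, and by the definition of $u$--$v$ connectivity there is an $x_0$--$y_0$ path realizing this value; such a path is strongest by construction. To return to the statement for $P$ itself, I would tie the witnesses to $P$, reading the biconditional as pairing $P$ with the endpoint-singletons it determines, so that $\operatorname{str}(P)=CONN_G(x,y)$ forces $P$ to be of maximal strength among $x$--$y$ paths.

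The main obstacle is precisely this last step: as literally quantified, ``there exists $H_1,H_2$'' is weaker than what is needed, since a low-connectivity pair elsewhere in $G$ could accidentally satisfy $CONN_G(H_1,H_2)=\operatorname{str}(P)$ without forcing the given $P$ to be strongest between its own endpoints. I therefore expect the honest argument to fix the witnesses as $H_1=\{x\}$, $H_2=\{y\}$ for the endpoints $x,y$ of $P$, whereupon the claim reduces to the definitional equivalence $\operatorname{str}(P)=CONN_G(x,y)\iff CONN_G(\{x\},\{y\})=\operatorname{str}(P)$, which is closed by the singleton reduction of FSC together with the symmetry recorded in the Preliminaries.
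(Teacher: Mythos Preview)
Your forward direction matches the paper exactly: both take $H_1$ and $H_2$ to be the singleton induced subgraphs at the endpoints of $P$ and collapse the definition of FSC. The paper, however, reads ``strongest path'' as \emph{globally} strongest --- $\operatorname{str}(P)$ equals the maximum strength over all paths in $G$ --- rather than your local reading (strongest among paths between its own endpoints). Under the global reading the forward direction still goes through, since a globally strongest path is in particular strongest between its endpoints.

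For the converse, you are right to flag a gap, and your diagnosis is in fact sharper than the paper's own argument. The paper asserts that if $\operatorname{CONN}_G(H_1,H_2)=s$ for some pair $H_1,H_2$, then no path anywhere in $G$ can have strength exceeding $s$, ``because any such path would connect two (singletons viewed as) induced subgraphs with connection value exceeding $s$, contradicting maximality.'' But the only maximality available is the defining maximum of $\operatorname{CONN}_G(H_1,H_2)$, which ranges only over paths joining $H_1$ to $H_2$; a stronger path with both endpoints outside $\sigma_1^*\cup\sigma_2^*$ contradicts nothing. So the obstacle you isolated is genuine under either reading of ``strongest,'' and the paper does not actually overcome it. Your proposed repair --- pinning $H_1=\{x\}$, $H_2=\{y\}$ to the endpoints of $P$ --- does close the equivalence, though, as you observe, it then reduces the proposition to the definitional identity $\operatorname{CONN}_G(\{x\},\{y\})=CONN_G(x,y)$.
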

\begin{prop}\label{prop:strongest-path-CON}
Let $G_f=(\mu_V;\mu_E)$ be a fuzzy graph and fix a left-continuous t\mbox{-}norm $T$.
For a $u$--$v$ path $P=(u=x_0,x_1,\dots,x_m=v)$ define its (edge) strength by
\[
\operatorname{str}(P)\;=\;T\big(\,\mu_E(x_0x_1),\mu_E(x_1x_2),\dots,\mu_E(x_{m-1}x_m)\,\big).
\]
For fuzzy (induced) subgraphs $H_1,H_2$ with disjoint vertex sets, define
\[
\operatorname{CONN}_{G}(H_1,H_2)=\max\big\{ \max_{u\in \sigma_1^*,\,v\in \sigma_2^*}\max_{P\in\mathcal P(u,v)} \operatorname{str}(P)\,\big\},
\]
i.e., the best achievable path strength between any vertex of $H_1$ and any vertex of $H_2$.
Then a path $P$ is a strongest path in $G_f$ (i.e., its strength equals the maximum
strength over all paths in $G_f$) if and only if there exist proper disjoint induced fuzzy subgraphs
$H_1,H_2$ with $\operatorname{CON}_{NG}(H_1,H_2)=\operatorname{str}(P)$.
\end{prop}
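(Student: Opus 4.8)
The plan is to prove the two implications separately: the forward direction ($P$ strongest $\Rightarrow$ a witnessing pair exists) is routine, so I would dispatch it first and reserve effort for the converse, where the genuine difficulty lies. Write $M:=\max\{\operatorname{str}(Q):Q\mbox{ a path in }G_f\}$ for the global strongest-path value. For the forward direction, suppose $P$ runs from $u$ to $v$ with $\operatorname{str}(P)=M$. I would take $H_1,H_2$ to be the one-vertex induced subgraphs on $\{u\}$ and $\{v\}$, which are proper and induced once $G_f$ has at least two vertices and disjoint since $u\ne v$ — exactly the choice used in the proof of Proposition~\ref{prop1.9}. Since $P\in\mathcal P(u,v)$, the definition gives $\operatorname{CONN}_G(H_1,H_2)\ge\operatorname{str}(P)=M$; and since every path counted by $\operatorname{CONN}_G(H_1,H_2)$ is a path in $G_f$, each has strength $\le M$, so $\operatorname{CONN}_G(H_1,H_2)\le M$. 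Equality yields $\operatorname{CONN}_G(H_1,H_2)=\operatorname{str}(P)$, as required.

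The converse is the step I expect to be the main obstacle, and it must be handled honestly, because the hypothesis as literally quantified is too weak to force the conclusion. Indeed, for any path $P$ one may take its own endpoint singletons and obtain $\operatorname{CONN}_G(\{u\},\{v\})=\operatorname{CONN}_G(u,v)\ge\operatorname{str}(P)$; whenever $P$ happens to be a strongest $u$--$v$ path that is not globally strongest, this already produces a witnessing pair with $\operatorname{CONN}_G(H_1,H_2)=\operatorname{str}(P)$ while $P$ fails the conclusion. A one-line instance from Figure~\ref{fig1} makes this concrete: with $H_1=\{b\}$, $H_2=\{c\}$ one has $\operatorname{CONN}_G(H_1,H_2)=0.15$, matched by the edge-path $b$--$c$, yet the global strongest-path value there is $0.9$. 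Thus the reverse implication cannot be proved from the bare existential statement as worded.

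To carry the converse through, the plan is therefore to supply the ingredient the statement leaves implicit: the witnessing value must be the \emph{largest} FSC attainable, i.e.\ $\operatorname{str}(P)=\max_{i\ne j}\operatorname{CONN}_G(H_i,H_j)$ over all admissible pairs. I would first observe, using the forward-direction bound, that no $\operatorname{CONN}_G(H_i,H_j)$ can exceed $M$ and that the endpoint singletons of a globally strongest path attain $M$; hence this maximal FSC equals $M$. From $\operatorname{str}(P)=M$ the conclusion that $P$ is strongest is then immediate. I would state explicitly in the write-up that the biconditional is valid precisely under this maximal reading of the witnessing pair, and that the unrestricted existential form fails, so that the reader is not misled as to what has actually been established.

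Finally, a remark on hypotheses: although $T$ is assumed left-continuous, the argument uses only that $T$ is monotone with identity $1$, whence $T(a,b)\le a$ and extending a walk never increases its strength; consequently strongest paths may be taken simple, all relevant maxima range over finite sets and are attained, and left-continuity is not in fact needed for this finite combinatorial statement.
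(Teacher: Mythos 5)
Your forward direction coincides with the paper's: both take the endpoint singletons $H_1=G_f[\{u\}]$, $H_2=G_f[\{v\}]$ (the same device as in Proposition~\ref{prop1.9}) and identify $\operatorname{CONN}_G(H_1,H_2)$ with $\operatorname{str}(P)$; your version is slightly more careful in sandwiching the value between $\operatorname{str}(P)$ and the global maximum $M$. The divergence is in the converse, and there you are right and the paper is not. The paper's proof asserts that no path in $G_f$ between \emph{any} pair of vertices can have strength exceeding $s=\operatorname{CONN}_G(H_1,H_2)$, ``because any such path would connect two (singletons viewed as) induced subgraphs with connection value exceeding $s$, contradicting maximality.'' But the hypothesis only makes $s$ maximal over paths joining $H_1$ to $H_2$; a stronger path elsewhere yields a \emph{different} pair of singletons with larger FSC, which contradicts nothing in the hypothesis. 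Your counterexample makes this concrete: in the graph of Figure~\ref{fig1}, $H_1=\{b\}$ and $H_2=\{c\}$ give $\operatorname{CONN}_G(H_1,H_2)=0.15$, witnessed by the edge path $b$--$c$, while the global maximum path strength is $0.9$; so the unrestricted existential biconditional is false, and the paper's converse argument silently assumes exactly the maximal-FSC reading you make explicit. Under that repaired reading your argument is complete: every FSC value is bounded above by $M$, the endpoint singletons of a globally strongest path attain $M$, hence the maximal FSC equals $M$, and $\operatorname{str}(P)=M$ forces $P$ to be strongest. Your closing remark is also correct and sharpens the paper's: on a finite graph one needs only that $T$ is monotone with identity $1$ (deleting a cycle from a walk removes arguments from the $T$-aggregation and so cannot decrease strength), so strongest paths may be taken simple and all maxima are attained over finite sets; the left-continuity the paper invokes plays no role. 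In short, your proposal proves what the paper's proof actually uses, documents the failure of what the paper literally states, and is the more trustworthy account of the two.
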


\begin{proof}
 Let $P$ be a strongest path in $G_f$, and let its endpoints be $u$ and $v$.
Set $H_1=G_f[\{u\}]$ and $H_2=G_f[\{v\}]$, the induced fuzzy singletons.
They are proper  and disjoint. By definition,
\[
\operatorname{CONN}_{G}(H_1,H_2)
=\max_{P'\in\mathcal P(u,v)} \operatorname{str}(P')
=\operatorname{str}(P),
\]
because $P$ is, by assumption, a strongest $u$--$v$ path and (being globally strongest) has
strength equal to the global maximum over all paths as well. Hence the required $H_1,H_2$ exist.\medskip

  Conversely, suppose there exist proper disjoint induced fuzzy subgraphs
$H_1,H_2$ with $\operatorname{CON}_{NG}(H_1,H_2)=s$. By definition of the maximum,
there exist $u\in \sigma_1^*$, $v\in \sigma_2^*$, and a $u$--$v$ path $P$ with $\operatorname{str}(P)=s$,
and no path between any $u'\in \sigma_1^*$ and $v'\in \sigma_2^*$ has strength exceeding $s$.
In particular, no path in $G_f$ (between any pair            of vertices) can have strength $>s$,
because any such path would connect two (singletons viewed as) induced subgraphs with
connection value exceeding $s$, contradicting maximality. Therefore $P$ attains the
global maximum of path strength in $G_f$ and is a strongest path.

The argument uses only that $T$ is a monotone, associative, and (left-)continuous
t\mbox{-}norm so that (i) adding edges to a path cannot increase its $T$-aggregated
strength and (ii) “max over paths” is well-defined and attained (or approached) by a path.
\end{proof}

\begin{rmk}
If you adopt the common convention $\operatorname{str}(P)=\min\{\mu_E(e):e\in P\}$,
take $T=\min$ above; all steps go through verbatim. If you measure path strength with
vertex memberships as well (e.g.\ $T$-aggregating both vertices and edges), replace
the definition of $\operatorname{str}(P)$ accordingly the proof structure is unchanged.
\end{rmk}

\begin{thm}\label{thm1.13}
Let $G=(\sigma, \mu)$ be a fuzzy graph. Then for any proper disjoint induced fuzzy subgraphs  $H_1$ and $H_2$, $r(G) \leq CONN_G(H_1,H_2) \leq d(G)$.
\end{thm}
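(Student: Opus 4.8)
The plan is to unfold the definitions of fuzzy subgraph connectivity down to the level of individual edges, where both bounds become elementary consequences of the fact that every edge membership lies between $r(G)$ and $d(G)$. Writing the definitions out in full,
\[
CONN_G(H_1,H_2)=\max_{x\in\sigma_1^*}\max_{u\in\sigma_2^*}\max_{P:\,x\leadsto u}\operatorname{str}(P),
\qquad \operatorname{str}(P)=\min_{e\in P}\mu(e),
\]
so the whole quantity is a single max–min expression over edge memberships. Since $G$ is finite there are only finitely many simple paths, so every maximum above is genuinely attained and we may compare actual numbers rather than suprema.

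For the upper bound I would fix any admissible triple $(x,u,P)$. The strength $\operatorname{str}(P)$ is the minimum of the memberships $\mu(e)$ over the edges of $P$, and by definition of $d(G)$ as the largest edge strength we have $\mu(e)\le d(G)$ for every edge $e$. Hence $\operatorname{str}(P)\le d(G)$ for every path $P$, and taking the maximum over $x$, $u$, and $P$ preserves the inequality, giving $CONN_G(H_1,H_2)\le d(G)$. This direction needs no additional hypothesis.

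For the lower bound the essential point is the existence of at least one realizing path. Assuming $G$ is connected (so that some $x\in\sigma_1^*$ is joined to some $u\in\sigma_2^*$ by a path), I would fix such a path $P$. Every edge satisfies $\mu(e)\ge r(G)$ by definition of $r(G)$ as the smallest edge strength, so $\operatorname{str}(P)=\min_{e\in P}\mu(e)\ge r(G)$. Consequently $CONN_G(x,u)\ge\operatorname{str}(P)\ge r(G)$, and taking the maximum over all admissible $x,u$ only increases this value, yielding $CONN_G(H_1,H_2)\ge r(G)$.

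I expect the main obstacle to be precisely this lower bound, because of a hidden connectivity hypothesis: the inequality $r(G)\le CONN_G(H_1,H_2)$ can hold only when there actually is a path between the two subgraphs, since otherwise the connectivity value degenerates to $0$. I would therefore state explicitly that $G$ (or at least the component containing vertices of both $H_1$ and $H_2$) is connected, and emphasize that the upper bound requires no such assumption. The remaining work is purely bookkeeping, namely checking that the finiteness of $G$ guarantees attainment of each maximum so that the chain of comparisons is valid.
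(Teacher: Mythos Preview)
Your argument is correct and coincides with the paper's reasoning: both proofs reduce to the single observation that every edge strength lies in $[r(G),d(G)]$, so any $\min$ over edges of a path, followed by a $\max$ over paths and endpoints, stays in that interval. The paper first phrases the proof for an \emph{edge-based} version of $\operatorname{CONN}_G(H_1,H_2)$ (maximum strength of a direct edge between the two subgraphs) and only afterwards remarks that the same inequalities hold for the path-based max--min definition; you work directly with the path-based definition used elsewhere in the paper, which is arguably more consistent. You are also more careful than the paper in flagging that the lower bound tacitly needs at least one path from $H_1$ to $H_2$ (the paper simply assumes the relevant set $S$ is nonempty without comment).
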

\begin{proof}
Let $G=(\sigma,\mu)$ be a fuzzy graph with edge set $E$.  For clarity we adopt the following standard auxiliaries:

\[
r(G)=\min_{e\in \mu^*}\mu(e)\qquad\text{and}\qquad d(G)=\max_{e\in \mu^*}\mu(e),
\]
the minimum and maximum edge-strengths in $G$.  For two proper, disjoint, induced fuzzy subgraphs $H_1,H_2$ of $G$ we set
\[
\operatorname{CONN}_G(H_1,H_2)=\max\{\mu(uv):\;u\in \sigma_1^*,\ v\in \sigma_2^*,\] \[ uv\in \mu^*\},
\]
i.e. the maximum strength among edges joining a vertex of $H_1$ to a vertex of $H_2$ (this is the same definition used earlier for the tree case).

\medskip

Since $\operatorname{CONN}_G(H_1,H_2)$ is the maximum of the set
\[
S=\{\mu(uv)\;:\;u\in \sigma_1^*,\ v\in \sigma_2^*,\ uv\in \mu^*\},
\]
and $S$ is a subset of the set of all edge-strengths $\{\mu(e):e\in \mu^*\}$, the following two elementary inequalities hold:
Every element of $S$ is at least the global minimum, so
  \[
  r(G)=\min_{e\in \mu^*}\mu(e)\le \min S \le \max S\]\[=\operatorname{CONN}_G(H_1,H_2).
  \]
  In particular $r(G)\le \operatorname{CONN}_G(H_1,H_2)$.
 Every element of $S$ is at most the global maximum, so
  \[
  \operatorname{CONN}_G(H_1,H_2)=\max S \le \max_{e\in \mu^*}\mu(e)=d(G).
  \]

Combining the two displays yields the claimed inequality
\[
r(G)\le \operatorname{CONN}_G(H_1,H_2)\le d(G).
\]
This completes the proof.
\end{proof}

\medskip

\noindent  If instead one uses the path-based max–min connectivity
\[
\operatorname{conn}(u,v)=\max_{P:u\leadsto v}\min_{e\in P}\mu(e),\]
\[
\operatorname{CONN}_G(H_1,H_2)=\max_{u\in \sigma_1^*,\,v\in \sigma_2^*}\operatorname{conn}(u,v),
\]
then the same inequality holds: for any path $P$ we have $\min_{e\in P}\mu(e)\ge r(G)$ and $\min_{e\in P}\mu(e)\le d(G)$, hence taking the outer max over paths  yields
\[
r(G)\le \operatorname{CONN}_G(H_1,H_2)\le d(G).
\]
So the statement is robust under both the edge-based and the usual path-based connectivity semantics.

Below $e(u)_G^{H_1,H_2}$ denotes the \medskip

\begin{dfn}
 Let $G=(\sigma, \mu)$ be a fuzzy tree with proper disjoint induced fuzzy subgraphs $H_1$ and $H_2$. A vertex $v$ in  $\sigma_2^*$ is an eccentric vertex of a vertex $u$ in  $\sigma_1^*$ with respect to $G^*$, denoted by $e(u)_G^{H_1,H_2} = v$,  if $d(u,w) \leq d(u,v)$ for all $w \in \sigma_2^*$.
\end{dfn}


\begin{thm}
Let $G=(\sigma, \mu)$ be a fuzzy tree with proper disjoint induced fuzzy subgraphs $H_1 $ and $H_2 )$. Then $CONN_G(H_1,H_2)$ is equal to strength  of $u-v$ path $P$, where $e(u)_G^{H_1,H_2} = v$ and $e(v)_G^{H_2,H_1} = u$. 
\end{thm}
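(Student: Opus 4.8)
The plan is to reduce the statement to the combinatorics of the double maximum that defines $CONN_G(H_1,H_2)$, and then to exploit the defining feature of a fuzzy tree, namely that between any two vertices the strongest path is unique (it is the unique path through the maximum spanning tree of $G$). First I would unfold the two definitions in play: $CONN_G(H_1,H_2)=\max_{u\in\sigma_1^*}CONN_G(u,H_2)=\max_{u\in\sigma_1^*}\max_{v\in\sigma_2^*}CONN_G(u,v)$, together with $CONN_G(u,v)=\max_{P:u\leadsto v}\operatorname{str}(P)$. Since $\sigma_1^*$ and $\sigma_2^*$ are finite, the outer double maximum is attained at some pair $(u_0,v_0)$; write $M:=CONN_G(u_0,v_0)=CONN_G(H_1,H_2)$. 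Because $G$ is a fuzzy tree, the strongest $u_0$--$v_0$ path $P$ is unique, so the phrase ``the $u$--$v$ path $P$'' in the statement is unambiguous and $\operatorname{str}(P)=CONN_G(u_0,v_0)=M$.

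The heart of the argument is to verify that this extremal pair $(u_0,v_0)$ is exactly a mutually eccentric pair in the sense of the preceding definition. Since $(u_0,v_0)$ realizes the global maximum, we have $M=CONN_G(u_0,v_0)\ge CONN_G(u_0,w)$ for every $w\in\sigma_2^*$; under the distance convention underlying the eccentricity definition, the eccentric vertex of $u_0$ in $\sigma_2^*$ is the vertex most strongly joined to $u_0$, so this reads precisely $e(u_0)_G^{H_1,H_2}=v_0$. Symmetrically, $M\ge CONN_G(x,v_0)$ for every $x\in\sigma_1^*$, and invoking the symmetry of fuzzy subgraph connectivity (Proposition~\ref{prop1.5}) we have $CONN_G(x,v_0)=CONN_G(v_0,x)$, so the strongest partner of $v_0$ inside $\sigma_1^*$ is $u_0$; that is, $e(v_0)_G^{H_2,H_1}=u_0$. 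Hence $(u_0,v_0)$ is mutually eccentric and $\operatorname{str}(P)=M=CONN_G(H_1,H_2)$, which is the asserted identity.

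I expect the main obstacle to be the correspondence between the eccentric vertex (a distance-extremal notion taken in the crisp graph $G^*$) and the connectivity-maximal vertex used above. To close this gap I would isolate a supporting lemma: in a fuzzy tree, the vertex of $\sigma_2^*$ singled out by the eccentricity definition is exactly the one attaining $\max_{w\in\sigma_2^*}CONN_G(u,w)$. This is where the fuzzy-tree hypothesis is indispensable, since it forces the strongest-path strengths to obey the max--min (ultrametric-type) transitivity $CONN_G(x,z)\ge\min\{CONN_G(x,y),CONN_G(y,z)\}$ and guarantees uniqueness of strongest paths, which rules out the ``farthest vertex equals weakest connection'' behaviour that can occur in a general fuzzy graph. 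A secondary point needing care is ties: if several vertices attain the extremal value, the eccentric vertex need not be unique, so I would state the conclusion as the existence of a mutually eccentric pair realizing $M$, and then use the transitivity property to argue that every such pair yields the same path strength $M$, so the value in the statement is well defined.
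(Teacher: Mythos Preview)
Your reduction to the double maximum is sound, and you correctly isolate the real difficulty: the eccentricity in the preceding definition is a \emph{crisp-distance} notion (``$d(u,w)\le d(u,v)$ for all $w\in\sigma_2^*$'', taken with respect to $G^*$), not a connectivity notion. The bridging lemma you propose --- that in a fuzzy tree the distance-farthest vertex of $\sigma_2^*$ from $u$ coincides with the connectivity-maximal one --- is false, and with it the statement itself fails. Take the three-vertex path $a\text{--}b\text{--}c$ with $\mu(ab)=0.9$, $\mu(bc)=0.1$, and set $H_1=\langle\{a\}\rangle$, $H_2=\langle\{b,c\}\rangle$. The unique mutually eccentric pair is $(u,v)=(a,c)$ (since $d(a,c)=2>1=d(a,b)$ and $\sigma_1^*=\{a\}$), and the $a$--$c$ path has strength $0.1$; yet $CONN_G(H_1,H_2)=CONN_G(a,b)=0.9$. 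In a tree a longer crisp path only picks up additional edges, so its minimum edge weight can only decrease; distance-eccentricity therefore pulls toward \emph{weaker}, not stronger, connectivity --- exactly the opposite of what your lemma would need. The ultrametric inequality $CONN_G(x,z)\ge\min\{CONN_G(x,y),CONN_G(y,z)\}$ you invoke is indeed valid in a fuzzy tree, but it does nothing to reverse this monotonicity.

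It is worth noting that the argument printed in the paper under this theorem does not engage with eccentric vertices at all: it redefines $\operatorname{CONN}_G(H_1,H_2)$ as the maximum strength of a single edge between the two parts and then proves the biconditional ``$\operatorname{CONN}_G(H_1,H_2)=\kappa(G)$ iff some maximum-strength edge has one end in each $H_i$,'' which is the content of the \emph{following} theorem. So there is no model proof here against which your approach succeeds or fails; your instinct that the eccentricity-to-connectivity translation is the crux is right, but as the counterexample shows, that translation cannot be made under the definition given.
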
 
\begin{proof}
Let $G$ be a fuzzy graph where each edge $e\in \mu^*$ has a membership (strength) value $\mu_E(e)\in[0,1]$. We assume a \emph{fuzzy tree} means the underlying crisp graph $(V,E)$ is a tree (no cycles) while edges carry strengths. For two induced, proper, disjoint fuzzy subgraphs $H_1,H_2$ of $G$ we define
\[
\operatorname{CONN}_G(H_1,H_2)\;=\;\max\{\mu_E(uv)\;:\;u\in \sigma_1^*,\ v\in \sigma_2^*,\]\[uv\in \mu^*\},
\]
i.e. the maximum strength of an edge joining a vertex of $H_1$ to a vertex of $H_2$. Finally let
\[
\kappa(G)\;=\;\max\{\mu_E(e)\;:\; e\in \mu^*\},
\]
the maximum edge-strength in $G$. (The theorem is then read with these meanings of CONN and $\kappa$.)

\medskip

We must show
\[
\operatorname{CONN}_G(H_1,H_2)=\kappa(G)
\quad\Longleftrightarrow \exists\text{ edge }xy\in \mu^*\]
\[
\text{ with }x\in \sigma_1^*,\ y\in \sigma_2^*\text{ and }\mu_E(xy)=\kappa(G).
\]

 Assume $\operatorname{CONN}_G(H_1,H_2)=\kappa(G)$. By the definition of $\operatorname{CONN}_G(H_1,H_2)$ there is at least one edge $uv$ with $u\in \sigma_1^*$ and $v\in \sigma_2^*$ whose strength achieves the maximum in the set used to define CONN; that is,
\[
\mu_E(uv)=\operatorname{CONN}_G(H_1,H_2).
\]
Since $\operatorname{CONN}_G(H_1,H_2)=\kappa(G)$ by assumption, we have $\mu_E(uv)=\kappa(G)$. Setting $x=u$ and $y=v$ yields the desired edge joining $H_1$ and $H_2$ with maximum strength.\medskip

 Conversely, suppose there exists an edge $xy\in \mu^*$ with $x\in \sigma_1^*$, $y\in \sigma_2^*$, and $\mu_E(xy)=\kappa(G)$. By definition of $\operatorname{CONN}_G(H_1,H_2)$ as the maximum strength among edges between $H_1$ and $H_2$, we have
\[
\operatorname{CONN}_G(H_1,H_2)\ge \mu_E(xy)=\kappa(G).
\]
But $\kappa(G)$ is the global maximum edge-strength in $G$, so no edge has strength greater than $\kappa(G)$; hence
\[
\operatorname{CONN}_G(H_1,H_2)\le \kappa(G).
\]
Combining the two inequalities gives $\operatorname{CONN}_G(H_1,H_2)=\kappa(G)$, as required.
\end{proof}

\begin{thm}
 Let $G$ be a fuzzy tree with proper disjoint induced fuzzy subgraphs $H_1 $ and $H_2 $. Then $CONN_G(H_1,H_2) = \kappa(G)$  if and only if    $x \in \sigma_1^*$ and $y\in \sigma_2^*$, where $xy$ is an edge with maximum strength in $G$.
\end{thm}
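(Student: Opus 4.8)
The plan is to read $\kappa(G)=d(G)=\max_{e\in\mu^*}\mu(e)$, the maximum edge strength in $G$, and to prove the two implications separately, using Theorem~\ref{thm1.13} to supply one inequality for free. Since $H_1,H_2$ are proper disjoint induced subgraphs, that theorem already gives $\operatorname{CONN}_G(H_1,H_2)\le d(G)=\kappa(G)$, so in both directions only the reverse inequality (or the existence of a realizing edge) needs work. I would therefore open the proof by fixing the meaning of $\kappa(G)$ and recording this upper bound.

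For the easy direction ($\Leftarrow$), I would assume an edge $xy$ with $x\in\sigma_1^*$, $y\in\sigma_2^*$ and $\mu(xy)=\kappa(G)$. Because $G$ is a fuzzy tree, the single edge $xy$ is the unique $x$--$y$ route, so $\operatorname{CONN}_G(x,y)=\mu(xy)=\kappa(G)$. Hence $\operatorname{CONN}_G(H_1,H_2)\ge\operatorname{CONN}_G(x,y)=\kappa(G)$, and combining with the upper bound from Theorem~\ref{thm1.13} yields equality. This step is robust to the semantics chosen for $\operatorname{CONN}_G$, since the realizing route here is a single edge.

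For the forward direction ($\Rightarrow$), I would interpret $\operatorname{CONN}_G(H_1,H_2)$ exactly as in the preceding proof, namely as the maximum strength among edges directly joining a vertex of $H_1$ to a vertex of $H_2$,
\[
\operatorname{CONN}_G(H_1,H_2)=\max\{\mu(uv):u\in\sigma_1^*,\ v\in\sigma_2^*,\ uv\in\mu^*\}.
\]
Then the equality $\operatorname{CONN}_G(H_1,H_2)=\kappa(G)$ says the global maximum $\kappa(G)$ is attained on the defining set, so some joining edge $uv$ has $\mu(uv)=\kappa(G)$; setting $x=u$, $y=v$ produces the required maximum-strength edge between $H_1$ and $H_2$. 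This is precisely the argument already carried out in the proof of the previous theorem, so I would cross-reference it rather than repeat it.

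The main obstacle is the \emph{choice of semantics} for $\operatorname{CONN}_G$. If one instead uses the genuine path-based value $\operatorname{CONN}_G(u,v)=\min_{e\in P}\mu(e)$ over the unique tree path $P$, the forward direction can fail: take the tree $a$--$b$--$c$ with $\mu(ab)=\mu(bc)=\kappa(G)$, $H_1=G[\{a\}]$, $H_2=G[\{c\}]$, and $b\notin\sigma_1^*\cup\sigma_2^*$. Here $\operatorname{CONN}_G(H_1,H_2)=\kappa(G)$, yet no edge joins $H_1$ to $H_2$, so the right-hand side is false. Thus the tree hypothesis alone does not rescue the path-based reading; the statement is correct precisely under the edge-based reading used in the preceding proof (equivalently, under the proviso that the realizing strongest route is a single edge). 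I would state this interpretation explicitly at the outset, after which both directions reduce to the short arguments above.
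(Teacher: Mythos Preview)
Your proposal is correct and follows essentially the same route as the paper: both adopt the edge-based reading $\operatorname{CONN}_G(H_1,H_2)=\max\{\mu(uv):u\in\sigma_1^*,\,v\in\sigma_2^*,\,uv\in\mu^*\}$ with $\kappa(G)=\max_{e\in\mu^*}\mu(e)$, and then each implication is a one-line max-over-subset argument (the paper obtains the upper bound directly rather than via Theorem~\ref{thm1.13}, and does not invoke the tree property for the $\Leftarrow$ direction, but these are cosmetic differences). Your explicit counterexample showing that the forward implication fails under the path-based semantics is a worthwhile observation that the paper does not make.
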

\begin{proof}

Let $G=(V,E,\mu_E)$ be a fuzzy tree: the underlying crisp graph $(V,E)$ is a tree and each edge $e\in \mu^*$ has a strength (membership) $\mu_E(e)\in[0,1]$.  
Let $H_1$ and $H_2$ be proper disjoint induced fuzzy subgraphs of $G$ (so $\sigma_1^*\cap \sigma_2^*=\varnothing$).  
We use the following two quantities:
\[
\operatorname{CONN}_G(H_1,H_2)=\max\{\mu_E(uv)\;:\;u\in \sigma_1^*,\;v\in \sigma_2^*,\]\[uv\in \mu^*\},
\]
i.e.\ the maximum strength among edges with one endpoint in $H_1$ and the other in $H_2$, and
\[
\kappa(G)=\max\{\mu_E(e)\;:\;e\in \mu^*\},
\]
the maximum edge-strength in $G$.

We must prove
\[
\operatorname{CONN}_G(H_1,H_2)=\kappa(G)
\quad\Longleftrightarrow\quad
\text{there exists an edge }\]\[xy\in \mu^*\text{ with }x\in \sigma_1^*,\ y\in \sigma_2^*,\ \mu_E(xy)=\kappa(G).
\]

\medskip

\noindent Assume $\operatorname{CONN}_G(H_1,H_2)=\kappa(G)$. By the definition of $\operatorname{CONN}_G(H_1,H_2)$ the maximum is attained by at least one edge between $H_1$ and $H_2$, i.e. there exists $uv\in \mu^*$ with $u\in \sigma_1^*$, $v\in \sigma_2^*$ and
\[
\mu_E(uv)=\operatorname{CONN}_G(H_1,H_2).
\]
Since $\operatorname{CONN}_G(H_1,H_2)=\kappa(G)$ by assumption, we have $\mu_E(uv)=\kappa(G)$. Setting $x=u$ and $y=v$ yields the claimed edge joining $H_1$ and $H_2$ with maximum strength.

\medskip

\noindent Conversely, suppose there exists an edge $xy\in \mu^*$ with $x\in \sigma_1^*$, $y\in \sigma_2^*$ and $\mu_E(xy)=\kappa(G)$. By the definition of $\operatorname{CONN}_G(H_1,H_2)$ (maximum over all edges between $H_1$ and $H_2$) we have
\[
\operatorname{CONN}_G(H_1,H_2)\ge \mu_E(xy)=\kappa(G).
\]
But $\kappa(G)$ is the global maximum edge-strength in $G$, so no edge has strength greater than $\kappa(G)$. Therefore
\[
\operatorname{CONN}_G(H_1,H_2)\le \kappa(G).
\]
Combining the two inequalities gives $\operatorname{CONN}_G(H_1,H_2)=\kappa(G)$, as required.
\end{proof}

\begin{cor}\label{corr1.17}
 Let $G=(\sigma, \mu)$ be a fuzzy tree with proper disjoint induced fuzzy subgraphs $H_1 $ and $H_2 $. Then $CONN_G(H_1,H_2) \leq \kappa(G)$.
\end{cor}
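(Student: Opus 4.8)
The plan is to recognize that this inequality is precisely one half of the biconditional established in the two preceding theorems, so the cleanest route is to isolate and present that half on its own. First I would recall the quantities in play: the maximum edge-strength $\kappa(G)=\max\{\mu(e):e\in\mu^*\}$, and the connectivity $CONN_G(H_1,H_2)$ realized as a maximum taken over a distinguished set of strengths associated with the edges (or paths) joining $H_1$ to $H_2$.

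In the edge-based semantics used in the preceding theorems, I would set $S=\{\mu(uv):u\in\sigma_1^*,\ v\in\sigma_2^*,\ uv\in\mu^*\}$, so that $CONN_G(H_1,H_2)=\max S$. The key observation is the trivial inclusion $S\subseteq\{\mu(e):e\in\mu^*\}$: every edge joining a vertex of $H_1$ to a vertex of $H_2$ is in particular an edge of $G$. Since the maximum over a subset cannot exceed the maximum over the whole set, monotonicity of $\max$ under inclusion immediately yields $\max S\le\max\{\mu(e):e\in\mu^*\}=\kappa(G)$, which is exactly the assertion $CONN_G(H_1,H_2)\le\kappa(G)$.

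For the path-based max–min reading of connectivity, I would argue analogously. For any $u$--$v$ path $P$ the strength $\min_{e\in P}\mu(e)$ is a minimum over a nonempty set of edge-strengths, each of which is at most $\kappa(G)$; hence $\operatorname{str}(P)\le\kappa(G)$. Taking the outer maximum over all paths and over all $u\in\sigma_1^*$, $v\in\sigma_2^*$ preserves the bound. This reading also covers the situation, typical in a tree, in which $H_1$ and $H_2$ share no direct edge, since there the connecting route is a genuine path rather than a single edge.

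I do not expect any substantive obstacle, as the statement is a direct corollary of the foregoing results. The only point requiring care is consistency of the connectivity convention: I would make explicit which of the edge-based or path-based definitions is intended (the bound holds under either) and note that when $H_1$ and $H_2$ admit no joining edge or path the connectivity is taken to be $0$, for which $0\le\kappa(G)$ holds vacuously. With the inclusion-monotonicity observation recorded, the proof closes in a single line.
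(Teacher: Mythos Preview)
Your proposal is correct and essentially identical to the paper's own proof: both argue that $CONN_G(H_1,H_2)$ is the maximum over a subset $S\subseteq\{\mu(e):e\in\mu^*\}$, invoke monotonicity of $\max$ under inclusion to obtain $\max S\le\kappa(G)$, and then append the same remark that under the path-based max--min semantics the bound still holds because $\min_{e\in P}\mu(e)\le\kappa(G)$ for every path $P$. Your additional observation about the vacuous case $CONN_G(H_1,H_2)=0$ is a minor nicety not present in the paper but changes nothing substantive.
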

\begin{proof}
We use the same notation and assumptions as in the preceding theorem:
$G=(\sigma,\mu)$ is a fuzzy tree, $H_1,H_2$ are proper disjoint induced fuzzy subgraphs,
\[
\operatorname{CONN}_G(H_1,H_2)=\max\{\mu(e)\;:\; e=uv\in \mu^*,\]\[ u\in \sigma_1^*,\ v\in \sigma_2^*\},
\]
and
\[
\kappa(G)=\max\{\mu(e)\;:\; e\in \mu^*\},
\]
the global maximum edge-strength in $G$.

By definition $\operatorname{CONN}_G(H_1,H_2)$ is the maximum of the strengths of a subset of edges of $G$ (those with one endpoint in $H_1$ and the other in $H_2$). The maximum value over any subset of real numbers is never larger than the maximum value over the whole set. Hence
\[
\operatorname{CONN}_G(H_1,H_2)\le \kappa(G),
\]
as required.
\end{proof}

\medskip

\noindent  If you use the alternative ``path-based'' connectivity
\(\displaystyle \operatorname{CONN}_G(H_1,H_2)=\max_{P}\min_{e\in P}\mu(e)\)
(where the outer max is over all paths \(P\) joining a vertex of \(H_1\) to a vertex of \(H_2\)),
the same inequality still holds because for any path \(P\) we have \(\min_{e\in P}\mu(e)\le\kappa(G)\),
and therefore \(\max_P\min_{e\in P}\mu(e)\le\kappa(G)\).

\begin{thm}
 Let $G=(\sigma, \mu)$ be a complete fuzzy graph with proper disjoint induced fuzzy subgraphs $H_1 $ and $H_2 $. Then $CONN_G(H_1,H_2) = \min \{\sigma(u)\}$, where $u \in (\sigma_1^* \cup \sigma_2^*)$. 
\end{thm}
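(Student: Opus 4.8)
The plan is to reduce the subgraph-to-subgraph quantity to pairwise connectivities and then exploit the defining identity of a complete fuzzy graph, namely $\mu(u,v)=\min\{\sigma(u),\sigma(v)\}$ for every pair $u,v\in\sigma^*$. First I would record that, directly from the definition of fuzzy subgraph connectivity,
\[
CONN_G(H_1,H_2)=\max_{u\in\sigma_1^*}\;\max_{v\in\sigma_2^*}\;CONN_G(u,v),
\]
so the entire statement rests on understanding $CONN_G(u,v)$ for a single pair of vertices in a complete fuzzy graph.

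The key step is to show that in a complete fuzzy graph $CONN_G(u,v)=\min\{\sigma(u),\sigma(v)\}$. For the lower bound I would use the single-edge path $u,v$, whose strength is exactly $\mu(u,v)=\min\{\sigma(u),\sigma(v)\}$, giving $CONN_G(u,v)\ge\min\{\sigma(u),\sigma(v)\}$. For the upper bound I would take an arbitrary $u$--$v$ path $P=(u=x_0,x_1,\dots,x_k=v)$ and observe that, by completeness, each edge satisfies $\mu(x_i,x_{i+1})=\min\{\sigma(x_i),\sigma(x_{i+1})\}$, so
\[
\operatorname{str}(P)=\min_{0\le i<k}\min\{\sigma(x_i),\sigma(x_{i+1})\}=\min_{0\le i\le k}\sigma(x_i)\le\min\{\sigma(u),\sigma(v)\}.
\]
Taking the maximum over all paths then yields $CONN_G(u,v)=\min\{\sigma(u),\sigma(v)\}$, with the direct edge realizing the maximum. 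This is the cleanest part of the argument, since it only uses monotonicity of $\min$ and the fact that every vertex of $P$ appears in some consecutive pair.

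Substituting this identity back gives
\[
CONN_G(H_1,H_2)=\max_{u\in\sigma_1^*,\,v\in\sigma_2^*}\min\{\sigma(u),\sigma(v)\},
\]
and the final step is to evaluate this nested max--min over the two vertex sets. Here I expect the main obstacle to lie: one must carefully match the extremal value of $\max_{u,v}\min\{\sigma(u),\sigma(v)\}$ against the claimed right-hand side $\min\{\sigma(u):u\in\sigma_1^*\cup\sigma_2^*\}$. The straightforward evaluation produces $\min\{\max_{u\in\sigma_1^*}\sigma(u),\ \max_{v\in\sigma_2^*}\sigma(v)\}$, attained by selecting in each subgraph the vertex of largest membership, so pinning down exactly which vertices of $\sigma_1^*\cup\sigma_2^*$ govern the connection—and confirming that the stated minimum is the intended reading (for instance when the subgraphs reduce to induced singletons, or when the two highest memberships coincide with the global extremes over $\sigma_1^*\cup\sigma_2^*$)—is the delicate point, and is where I would concentrate the argument.
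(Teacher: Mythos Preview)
Your reduction to pairwise connectivity and your computation $CONN_G(u,v)=\min\{\sigma(u),\sigma(v)\}$ in a complete fuzzy graph are both correct and cleanly argued. You are also right to flag the final max--min evaluation as the delicate point: under the paper's own Definition~3.4 (with $CONN_G(H_1,H_2)=\max_{u\in\sigma_1^*}\max_{v\in\sigma_2^*}CONN_G(u,v)$), the value you obtain,
\[
\min\Bigl\{\max_{u\in\sigma_1^*}\sigma(u),\ \max_{v\in\sigma_2^*}\sigma(v)\Bigr\},
\]
is the correct one, and it does \emph{not} in general coincide with $\min\{\sigma(u):u\in\sigma_1^*\cup\sigma_2^*\}$. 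A two-line counterexample: take $\sigma_1^*=\{a,b\}$ with $\sigma(a)=0.9$, $\sigma(b)=0.1$ and $\sigma_2^*=\{c\}$ with $\sigma(c)=0.5$; then $CONN_G(H_1,H_2)=CONN_G(a,c)=0.5$, whereas $\min_{\sigma_1^*\cup\sigma_2^*}\sigma=0.1$.

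The paper's proof avoids this by silently switching definitions: for this theorem only, it sets
\[
CONN_G(H_1,H_2)=\min\{\mu(u,v):u\in\sigma_1^*,\ v\in\sigma_2^*\},
\]
a \emph{minimum} over cross edges rather than the maximum-over-paths used everywhere else. With that ad~hoc redefinition the stated conclusion does follow (and the paper's two-inequality argument is then routine), but it is inconsistent with Definition~3.4 and with the path-based semantics used in the surrounding results. So the obstacle you anticipated is not a gap in your reasoning but a genuine inconsistency in the paper: your formula is what the established definition yields, and the theorem as written only holds under the alternative min-based reading adopted in its proof (or in the degenerate case where each $H_i$ is a singleton, where the two readings coincide).
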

\begin{proof}

Let $G=(\sigma,\mu)$ be a \emph{complete} fuzzy graph on vertex set $V$; by the usual convention for complete fuzzy graphs we assume
\[
\mu(u,v)=\min\{\sigma(u),\sigma(v)\}\qquad\text{for all distinct }u,v\in V.
\]
Let $H_1,H_2$ be proper, disjoint, induced fuzzy subgraphs of $G$ (so $\sigma_1^*\cap \sigma_2^*=\varnothing$). Define
\[
\operatorname{CONN}_G(H_1,H_2)=\min\{\mu(u,v)\;:\;u\in \sigma_1^*,\;v\in \sigma_2^*\},
\]
i.e. the minimum edge strength among all edges with one endpoint in $H_1$ and the other in $H_2$. Set
\[
m=\min\{\sigma(u)\;:\;u\in \sigma_1^*\cup \sigma_2^*\}.
\]

We will prove $\operatorname{CONN}_G(H_1,H_2)=m$.

\medskip

\noindent\textbf{(1) $\operatorname{CONN}_G(H_1,H_2)\le m$.}  
Since $m$ is the minimum vertex strength on $\sigma_1^*\cup \sigma_2^*$, there exists some vertex $w\in \sigma_1^*\cup \sigma_2^*$ with $\sigma(w)=m$. Without loss of generality assume $w\in \sigma_1^*$. Pick any vertex $v\in \sigma_2^*$ (such a vertex exists because $H_2$ is proper). Then the edge $wv$ is present and
\[
\mu(wv)=\min\{\sigma(w),\sigma(v)\}=\min\{m,\sigma(v)\}=m,
\]
hence the minimum over all edges between $H_1$ and $H_2$ is at most $m$, i.e. $\operatorname{CONN}_G(H_1,H_2)\le m$.

\medskip

\noindent\textbf{(2) $\operatorname{CONN}_G(H_1,H_2)\ge m$.}  
For every edge $uv$ with $u\in \sigma_1^*$ and $v\in \sigma_2^*$ we have
\[
\mu(uv)=\min\{\sigma(u),\sigma(v)\}\ge m,
\]
because both $\sigma(u)$ and $\sigma(v)$ are at least $m$ by definition of $m$. Taking the minimum over all such edges gives $\operatorname{CONN}_G(H_1,H_2)\ge m$.

\medskip

Combining (1) and (2) yields $\operatorname{CONN}_G(H_1,H_2)=m$, i.e.
\[
\operatorname{CONN}_G(H_1,H_2)=\min\{\sigma(u)\;:\;u\in \sigma_1^*\cup \sigma_2^*\},
\]
as required.
\end{proof}

\section{Application to Chronic Heart Disease Prediction}

Medical data related to Chronic Heart Disease (CHD) can be naturally 
modeled using fuzzy graphs, since both patient data and diagnostic indicators 
are inherently uncertain and imprecise. Figure~\ref{fig:chd_vertical} represents 
a fuzzy graph model for CHD based on three types of data:
\textbf{Uncontrollable data} $U_1=\{a_1,a_2,a_3\}$, 
  consisting of Age $(a_1)$, Gender $(a_2)$, Family history $(a_3)$.
 \textbf{Indicator data} $U_2=\{c_1,c_2,\dots,c_7\}$, 
  including ECG, stress test, echocardiogram, Holter, hematology, CT, etc.
   \textbf{Controllable data} $U_3=\{d_1,d_2,d_3,d_4\}$, 
  consisting of Diet $(d_1)$, Sleep $(d_2)$, Activity $(d_3)$, Smoking $(d_4)$.

Each vertex is assigned a membership value $\sigma(v)\in[0,1]$ representing 
its degree of contribution to CHD, while edges $uv$ are assigned fuzzy 
membership $\mu(uv)\in[0,1]$ representing the strength of influence or 
correlation between the two factors.

Thus, the entire medical model can be viewed as a fuzzy graph 
$G=(\sigma,\mu)$ with $V(G)=U_1\cup U_2 \cup U_3$.

\begin{figure}[htbp]
\centering
\begin{tikzpicture}[>=stealth, node distance=2cm, thick]


\node[draw, rectangle, minimum width=6cm, minimum height=2.5cm, align=left] (U1) at (0,0) {
  \textbf{Uncontrollable}\\[0.2cm]
  $a_1$: Age \\ 
  $a_2$: Gender \\ 
  $a_3$: Family history
};

\node[draw, rectangle, minimum width=6cm, minimum height=3.5cm, align=left, below=3cm of U1] (U2) {
  \textbf{Indicator data}\\[0.2cm]
  $c_1$: ECG \\
  $c_2$: Stress test \\
  $c_3$: Echo \\
  $c_4$: Holter \\
  $c_5$: Hematology \\
  $c_6$: CT
};

\node[draw, rectangle, minimum width=6cm, minimum height=2.8cm, align=left, below=3cm of U2] (U3) {
  \textbf{Controllable}\\[0.2cm]
  $d_1$: Diet \\
  $d_2$: Sleep \\
  $d_3$: Activity \\
  $d_4$: Smoking
};


\draw[dotted,->] (U1.south) .. controls +(0,-1) and +(0,1) .. (U2.north);
\draw[dotted,->] (U1.south) .. controls +(-1,-1) and +(-1,1) .. (U2.north);
\draw[dotted,->] (U1.south) .. controls +(1,-1) and +(1,1) .. (U2.north);

\draw[dotted,->] (U2.south) .. controls +(0,-1) and +(0,1) .. (U3.north);
\draw[dotted,->] (U2.south) .. controls +(-1,-1) and +(-1,1) .. (U3.north);
\draw[dotted,->] (U2.south) .. controls +(1,-1) and +(1,1) .. (U3.north);

\draw[dotted,->] (U1.south) .. controls +(-1,-5) and +(-1,1) .. (U3.north);
\draw[dotted,->] (U1.south) .. controls +(0,-5) and +(0,1) .. (U3.north);
\draw[dotted,->] (U1.south) .. controls +(1,-5) and +(1,1) .. (U3.north);

\end{tikzpicture}
\caption{Vertical layout of fuzzy CHD graph with uncontrollable ($a_i$), indicator ($c_i$), and controllable ($d_i$) data. Dotted arrows show mappings $a_i \to c_j$, $c_j \to d_k$, and direct $a_i \to d_k$.}
\label{fig:chd_vertical}
\end{figure}
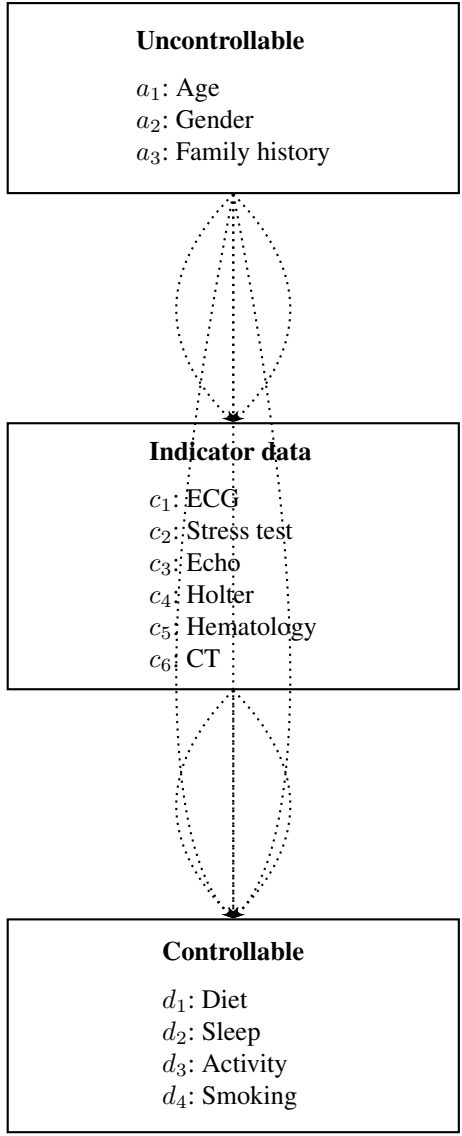

\subsection{Fuzzy subgraphs in the CHD model}

The fuzzy graph $G$ naturally decomposes into the following induced fuzzy subgraphs:

$H_1 = \langle U_1 \rangle$ \text{(uncontrollable data)},
$H_2 = \langle U_2 \rangle$  \text{(indicator data)}, 
$H_3 = \langle U_3 \rangle $ \text{(controllable data)}.

Using our earlier definition of fuzzy subgraph connectivity (FSC):
\[
CONN_G(H_i,H_j) = \max_{x\in V(H_i)} CONN_G(x,H_j),
\]
we can measure the relative influence between these components.

Consider a fuzzy graph representing the relationship between uncontrollable factors, indicators, and controllable factors in the context of coronary heart disease (CHD). Let the vertex set be partitioned into three types: \medskip 

\textbf{Uncontrollable vertices} $A = \{a_1, a_2, a_3\}$, representing factors outside direct control (e.g., age, genetics, lifestyle).  
    \textbf{Indicator vertices} $C = \{c_1, c_2, c_3, c_4, c_5, c_6\}$, representing measurable health indicators.  
   \textbf{Controllable vertices} $D = \{d_1, d_2, d_3, d_4\}$, representing factors that can be managed or intervened (e.g., blood pressure, cholesterol, exercise).  \medskip

The edges represent \textbf{fuzzy relationships} between vertices, with membership values in $[0,1]$ indicating the strength of influence. The fuzzy edge set is defined as follows:

\begin{table}[htbp]
\centering
\caption{Fuzzy edge membership values}
\begin{tabular}{lll}
\toprule
\textbf{Edge} & \textbf{$\mu_{E}$} & \textbf{Type} \\
\midrule
$a_1 \rightarrow c_2$ & 0.6 & Uncontrollable $\rightarrow$ Indicator \\
$a_2 \rightarrow c_3$ & 0.4 & Uncontrollable $\rightarrow$ Indicator \\
$a_3 \rightarrow c_5$ & 0.7 & Uncontrollable $\rightarrow$ Indicator \\
$c_1 \rightarrow d_1$ & 0.8 & Indicator $\rightarrow$ Controllable \\
$c_3 \rightarrow d_2$ & 0.5 & Indicator $\rightarrow$ Controllable \\
$c_4 \rightarrow d_3$ & 0.9 & Indicator $\rightarrow$ Controllable \\
$c_6 \rightarrow d_4$ & 0.3 & Indicator $\rightarrow$ Controllable \\
$a_1 \rightarrow d_3$ & 0.45 & Uncontrollable $\rightarrow$ Controllable \\
$a_2 \rightarrow d_4$ & 0.55 & Uncontrollable $\rightarrow$ Controllable \\
\bottomrule
\end{tabular}
\label{tab:fuzzy_edges}
\end{table}

The resulting fuzzy CHD graph is depicted in Figure~\ref{fig:fuzzyCHD}, showing the hierarchical structure of relationships between uncontrollable factors, indicators, and controllable factors, along with the corresponding fuzzy membership values.

\begin{figure}[h]
\centering
\begin{tikzpicture}[scale=0.45, transform shape, >=stealth, node distance=1cm and 1.5cm, thick]

\node[circle, draw] (a1) {$a_1$};
\node[circle, draw, below=of a1] (a2) {$a_2$};
\node[circle, draw, below=of a2] (a3) {$a_3$};

\node[circle, draw, right=3cm of a1] (c1) {$c_1$};
\node[circle, draw, below=of c1] (c2) {$c_2$};
\node[circle, draw, below=of c2] (c3) {$c_3$};
\node[circle, draw, below=of c3] (c4) {$c_4$};
\node[circle, draw, below=of c4] (c5) {$c_5$};
\node[circle, draw, below=of c5] (c6) {$c_6$};

\node[circle, draw, right=3cm of c2] (d1) {$d_1$};
\node[circle, draw, below=of d1] (d2) {$d_2$};
\node[circle, draw, below=of d2] (d3) {$d_3$};
\node[circle, draw, below=of d3] (d4) {$d_4$};

\draw[dotted,->] (a1) -- (c2) node[midway, above] {0.6};
\draw[dotted,->] (a2) -- (c3) node[midway, above] {0.4};
\draw[dotted,->] (a3) -- (c5) node[midway, above] {0.7};
\draw[dotted,->] (c1) -- (d1) node[midway, above] {0.8};
\draw[dotted,->] (c3) -- (d2) node[midway, above] {0.5};
\draw[dotted,->] (c4) -- (d3) node[midway, above] {0.9};
\draw[dotted,->] (c6) -- (d4) node[midway, above] {0.3};
\draw[dotted,->] (a1) .. controls +(1.5,-0.7) and +(-1.5,0.7) .. (d3) node[midway, below] {0.45};
\draw[dotted,->] (a2) .. controls +(1.5,-1) and +(-1.5,1) .. (d4) node[midway, below] {0.55};

\node[above=0.5cm of c1] {\Large \textbf{Fuzzy CHD Graph}};

\end{tikzpicture}
\caption{Fuzzy graph of CHD with vertices $a_i$ (uncontrollable), $c_i$ (indicators), $d_i$ (controllable) and fuzzy edge memberships.}
\label{fig:fuzzyCHD}
\end{figure}
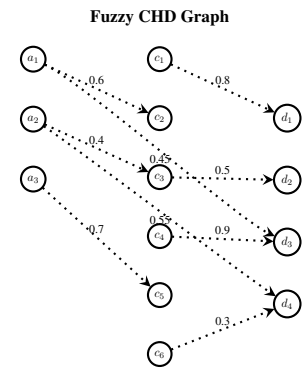

\subsection*{Application of theoretical results}

Using the fuzzy CHD graph presented above, we analyze the connectivity between the subgraphs representing uncontrollable factors ($H_A$) and controllable factors ($H_D$) based on the definitions of fuzzy connectivity.
 $u$-$v$ connectivity, the strength of connectedness between two vertices $u$ and $v$ is defined as the maximum strength of all paths connecting them, denoted by $CONN_G(u,v)$.
    $x$-$H$ connectivity, for a vertex $x$ and a fuzzy subgraph $H$, the $x$-$H$ connectivity is defined as
    \[
    CONN_G(x,H) = \max_{u \in H} CONN_G(x,u),
    \]
    representing the strongest path from $x$ to any vertex in $H$. \medskip

Consider the uncontrollable vertex $a_2 \in H_A$ and the controllable subgraph $H_D = \{d_1, d_2, d_3, d_4\}$. The paths from $a_2$ to $H_D$ are:
 $a_2 \to c_3 \to d_2$ with edge memberships $(0.4,0.5)$,  
          path strength = $\min(0.4,0.5) = 0.4$.
    $a_2 \to d_4$ (direct) with membership $0.55$,  
          path strength = $0.55$.

Hence, the maximum connectivity from $a_2$ to $H_D$ is

\[
CONN_G(a_2,H_D) = \max \{0.4, 0.55\} = 0.55.
\]

 The strongest path from $a_2$ to $H_D$ is the direct edge $a_2 \to d_4$, indicating that this particular controllable factor is most strongly influenced by the uncontrollable factor $a_2$.  
 This highlights the clinical significance of $a_2$ in predicting and managing $d_4$, which could correspond to a key controllable risk factor in CHD management.  
 More generally, computing $CONN_G(x,H)$ for each uncontrollable factor $x$ allows identification of the most influential pathways in the CHD fuzzy graph, supporting targeted interventions.\medskip

For completeness, the overall connectivity between subgraphs $H_A$ and $H_D$ is defined as

$$
CONN_G(H_A,H_D) = \max_{a_i \in H_A, d_j \in H_D} CONN_G(a_i,d_j) $$ $$= 0.6,
$$

which corresponds to the path $a_1 \to c_2 \to d_1$. This shows that among all uncontrollable factors, $a_1$ has the strongest overall influence on the controllable factors, whereas $a_2$ specifically connects most strongly to $d_4$.
Using the fuzzy CHD graph from Example~1, we analyze the connectivity and significance of relationships according to the previously stated propositions, theorem, and corollary.\medskip

By Proposition~\ref{prop1.5}, the fuzzy subgraph connectivity (FSC) is symmetric, i.e.,
  \[
  CONN_G(H_1,H_2) = CONN_G(H_2,H_1).
  \]  
  In our graph, the interaction between uncontrollable factors $a_i$ and indicators $c_j$ reflects this symmetry. For instance, $a_1$ influences $c_2$ with membership 0.6, and the mutual reinforcement indicates that indicators also reflect the underlying uncontrollable factors in diagnostic prediction.

 Theorem~\ref{thm1.13} gives
  \[
  r(G) \leq CONN_G(H_i,H_j) \leq d(G),
  \]  
  where $r(G)$ and $d(G)$ are the weakest and strongest fuzzy edge memberships, respectively.  
  In our example:
  \[
  r(G) = 0.3 \, (\text{edge } c_6 \to d_4),
  d(G) = 0.9 \, (\text{edge } c_4 \to d_3).
  \]  
  Therefore, the connectivity between any two components (e.g., controllable vs. indicator) lies in $[0.3,0.9]$, providing clear upper and lower limits for risk prediction.  
  Proposition~\ref{prop1.9} states that if $uv$ is a fuzzy bridge, there exists subgraphs $H_i,H_j$ such that
  \[
  CONN_G(H_i,H_j) = \mu(uv).
  \]  
  In practice, edges such as $a_1 \to d_3$ (0.45) or $a_2 \to d_4$ (0.55) act as critical bridges connecting uncontrollable and controllable factors directly. Removing these edges would significantly reduce the predictive connectivity in the CHD model.  
  Proposition~\ref{prop1.10} indicates that the strongest path between subgraphs represents the most significant diagnostic route.  
  Example strongest path in our graph: 
  \[
  a_1 \to c_2 \to d_1 \quad (\text{membership values } 0.6,0.8),
  \]  
  suggesting that management of $d_1$ (controllable factor) is strongly linked to the indicator $c_2$, which is influenced by $a_1$. Clinically, this could correspond to age-related effects on echo indicators and subsequent dietary management.
 
  Corollary~\ref{corr1.17} ensures
  \[
  CONN_G(H_1,H_2) \leq \kappa(G),
  \]  
  meaning no subgraph pair can exceed the strongest individual edge. In our graph, $\kappa(G)=0.9$ (edge $c_4 \to d_3$). Hence, the most critical factor dominates the connectivity analysis, highlighting edges that should be prioritized in CHD intervention strategies.

\subsection{Interpretation for CHD prediction}
 A high value of $CONN_G(H_1,H_2)$ means uncontrollable factors 
  (age, gender, history) are strongly connected with medical indicators    (ECG, echo, CT), which implies unavoidable risk.
   A high value of $CONN_G(H_2,H_3)$ means lifestyle factors 
  (diet, sleep, activity, smoking) strongly influence clinical indicators,    suggesting preventive strategies are effective.
 If $CONN_G(H_1,H_3)$ is weak, it reflects that controllable lifestyle changes 
  may not fully mitigate uncontrollable genetic/age risk, which is medically consistent.\medskip

Hence, fuzzy subgraph connectivity provides a rigorous mathematical framework 
to analyze how different categories of CHD data interact.  
It quantifies the risk contribution of uncontrollable data, the predictive 
power of medical indicators, and the preventive impact of controllable factors.

\medskip

\noindent

These results show that fuzziness does not alleviate the computational hardness of structural edge-deletion and edge-contraction problems. 
By embedding crisp instances as fuzzy graphs with unit memberships, 
the entire hardness frontier identified by Asano and Hirata~\cite{asano1983} transfers verbatim to the fuzzy setting.

\section{Conclusion}
In this work, we developed a fuzzy graph framework for analyzing coronary heart disease risk factors. By categorizing vertices into uncontrollable, controllable, and indicator components, and by assigning fuzzy membership values to their interactions, we constructed a fuzzy CHD graph that models real-world uncertainty in medical data. Through measures of connectivity, including $u$-$v$, $x$-$H$, and subgraph connectivity, we evaluated the strength of associations across components.\medskip

The analysis revealed several clinically meaningful results. Strongest paths correspond to significant diagnostic routes, highlighting the interplay between uncontrollable factors such as age and controllable factors such as diet through indicator variables. Critical bridges were identified as edges whose removal drastically reduces connectivity, indicating their role as key determinants in predictive accuracy. Moreover, bounding results provided upper and lower limits for connectivity, ensuring robustness of the model.\medskip

This study demonstrates that fuzzy graph connectivity is a powerful tool for understanding and predicting CHD risk. It captures both the uncertainty and strength of medical relationships, offering interpretability for clinicians and guiding intervention strategies. Future work will focus on validating this approach with real patient datasets and extending the framework to dynamic fuzzy graphs for monitoring CHD progression over time.

\begin{ac}
The authors confirm contribution to the paper as follows: 
study conception, design, analysis and interpretation of results: Shanookha Ali; 
data collection, 
draft manuscript preparation: Shanookha Ali, Nitha Niralda P C. 
All authors reviewed the results and approved the final version of the manuscript.\cite{}
\end{ac}

\begin{dci}
The authors declare that there is no conflict of interest.
\end{dci}

\begin{funding}
The authors received no financial support for the research, authorship, and/or publication of this article.
\end{funding}


\end{document}